\documentclass{elsarticle}\usepackage[]{graphicx}\usepackage{xcolor}
\makeatletter
\def\maxwidth{ %
  \ifdim\Gin@nat@width>\linewidth
    \linewidth
  \else
    \Gin@nat@width
  \fi
}
\makeatother

\definecolor{fgcolor}{rgb}{0.345, 0.345, 0.345}

\usepackage{framed}
\makeatletter
 {\par\unskip\endMakeFramed%
 \at@end@of@kframe}
\makeatother

\definecolor{shadecolor}{rgb}{.97, .97, .97}
\definecolor{messagecolor}{rgb}{0, 0, 0}
\definecolor{warningcolor}{rgb}{1, 0, 1}
\definecolor{errorcolor}{rgb}{1, 0, 0}
\newenvironment{knitrout}{}{} 

\usepackage{alltt} 

\usepackage{amssymb}
\usepackage{graphicx}
\usepackage{amsmath}
\usepackage{amsthm}
\usepackage{mathtools}
\usepackage{tikz}
\usepackage{booktabs}
\newtheorem{theorem}{Theorem}

\newcommand{\featdim}{d}
\newcommand{\Nunl}{U}
\newcommand{\Nlab}{L}
\newcommand{\X}{\mathbf{X}  }
\newcommand{\Xe}{\mathbf{X}_e  }
\newcommand{\XeT}{\mathbf{X}_e^{T}}
\newcommand{\ye}{\begin{bmatrix} \mathbf{y}  \\ \mathbf{y}_u \end{bmatrix}}
\newcommand{\G}{\left(\Xe^T \Xe \right)^{-1}}
\newcommand{\missidentity}{\begin{bmatrix} 0 & 0 \\ 0 & \textbf{I }\end{bmatrix}}
\newcommand{\Greg}{\left(\Xe^{T} \Xe + \lambda \missidentity \right)^{-1}}
\newcommand{\Cb}{\mathcal{C}_{\beta}}
\IfFileExists{upquote.sty}{\usepackage{upquote}}{}
\begin{document}

\begin{frontmatter}
\date{}
\title{Robust Semi-supervised Least Squares Classification by Implicit Constraints}

\author[prlab,molepi]{Jesse H. Krijthe\corref{cor1}}
\ead{jkrijthe@gmail.com}

\author[prlab,imagegroup]{Marco Loog}
\ead{m.loog@tudelft.nl}

\cortext[cor1]{Corresponding Author}

\address[prlab]{Pattern Recognition Laboratory, Delft University of Technology, Mekelweg 4, 2628CD Delft, The Netherlands}
\address[molepi]{Department of Molecular Epidemiology, Leiden University Medical Center, Einthovenweg~20, 2333ZC Leiden, The~Netherlands}
\address[imagegroup]{Image Group, University of Copenhagen, Universitetsparken 5, DK-2100, Copenhagen, Denmark}

\begin{abstract}
We introduce the implicitly constrained least squares (ICLS) classifier, a novel semi-supervised version of the least squares classifier. This classifier minimizes the squared loss on the labeled data among the set of parameters implied by all possible labelings of the unlabeled data.
Unlike other discriminative semi-supervised methods, this approach does not introduce explicit additional assumptions into the objective function, but leverages implicit assumptions already present in the choice of the supervised least squares classifier.
This method can be formulated as a quadratic programming problem and its solution can be found using a simple gradient descent procedure. 
We prove that, in a limited 1-dimensional setting, this approach never leads to performance worse than the supervised classifier.
Experimental results show that also in the general multidimensional case performance improvements can be expected, both in terms of the squared loss that is intrinsic to the classifier, as well as in terms of the expected classification error.
\end{abstract}

\begin{keyword}
Semi-supervised learning, Robust, Least squares classification
\end{keyword}

\end{frontmatter}

\section{Introduction}
We consider the problem of semi-supervised learning of binary classification functions. As in the supervised paradigm, the goal in semi-supervised learning is to construct a classification rule that maps objects in some input space to a target outcome, such that future objects map to correct target outcomes as well as possible. In the supervised paradigm this mapping is learned using a set of $\Nlab$ training objects and their corresponding outputs. In the semi-supervised scenario we are given an additional and often large set of $\Nunl$ unlabeled objects. The challenge of semi-supervised learning is to incorporate this additional information to improve the classification rule.

The goal of this work is to build a semi-supervised version of the least squares classifier that is robust against deterioration in performance meaning that, at least in expectation, its performance is not worse than supervised least squares classification.
While it may seem like an obvious requirement for any semi-supervised method, current approaches to semi-supervised learning do not have this property. 
In fact, performance can significantly degrade as more unlabeled data is added, as has been shown in \cite{Cozman2006,Cozman2003}, among others.
This makes it difficult to apply these methods in practice, especially when there is a small amount of labeled data to identify possible reduction in performance.
A useful property of any semi-supervised learning procedure would therefore be that its performance does not degrade as we add more unlabeled data.
Additionally, many semi-supervised learning procedures are formulated as hard-to-optimize, non-convex objective functions. 
A more satisfactory state of affairs for semi-supervised classification would therefore be methods that are easier to train and that, on average, do not lead to worse classification performance than their supervised alternatives.

We present a novel approach to semi-supervised learning for the least squares classifier that we will refer to as implicitly constrained least squares classification (ICLS). ICLS leverages implicit assumptions present in the supervised least squares classifier to construct a semi-supervised version. This is done by minimizing the supervised loss function subject to the constraint that the solution has to correspond to the solution of the least squares classifier for some labeling of the unlabeled objects.

As this work is specifically concerned with least squares classification, we note several reasons why this is a particularly interesting classifier to study: 
  First of all, the least squares classifier is a discriminative classifier. 
Some have claimed semi-supervised learning without additional assumptions is impossible for discriminative classifiers \cite{Seeger2001,Singh2008}. Our results show this does not strictly hold. 

Secondly, the closed-form solution for the supervised least squares classifier allows us to study its theoretical properties. In particular, in the univariate setting without intercept and assuming perfect knowledge of $P_X$, the distribution of the feature, we show this procedure \emph{never} gives worse performance in terms of the squared loss criterion compared to the supervised least squares classifier. Moreover, using the closed-form solution we can rewrite our semi-supervised approach as a quadratic programming problem, which can be solved through a simple gradient descent with boundary constraints.

Lastly, least squares classification is a useful and adaptable classification technique  allowing for straightforward use of, for instance, regularization, sparsity penalties or kernelization \cite{Hastie2009,Poggio2003,Rifkin2003,Suykens1999,Tibshirani1996}. 
Using these formulations, it has been shown to be competitive with state-of-the-art methods based on loss functions other than the squared loss \cite{Rifkin2003} as well as computationally efficient on large datasets \cite{Bottou2010}.

This work builds on \cite{Krijthe2015} and offers a more complete exposition: we show ICLS can be formulated  as a quadratic programming problem, we extend the experimental results section by including an alternative semi-supervised procedure, adding additional datasets and discussing the ‘peaking’ phenomenon. Moreover, we extend the theoretical result with conditions when one is likely to see improvement of the proposed approach over the supervised classifier.

The main contributions of this paper are
\begin{itemize}
\item A novel convex formulation for robust semi-supervised learning using squared loss (Equation \ref{icls})
\item A proof that this procedure never reduces performance in terms of the squared loss for the 1-dimensional case without intercept (Theorem \ref{theorem:1d})
\item An empirical evaluation of the properties of this classifier (Section \ref{section:empiricalresults})
\end{itemize}

The rest of this paper is organized as follows. 
Section \ref{section:relatedwork} gives an overview of related work on semi-supervised learning. 
Section \ref{section:overview} gives a high level overview of the method while Section \ref{section:method} introduces our semi-supervised version of the least squares classifier in more detail. 
We then derive a quadratic programming formulation and present a simple way to solve this problem through bounded gradient descent. 
Section \ref{section:theoreticalresults} contains a proof of the improvement of the ICLS classifier over the supervised alternative. This proof is specific to classification with a single feature, without including an intercept in the model. For the multivariate case, we present an empirical evaluation of the proposed approach on benchmark datasets in Section \ref{section:empiricalresults} to study its properties. The final sections discuss the results and conclude.

\section{Related Work} 
\label{section:relatedwork}
Many diverse approaches to semi-supervised learning have been proposed   \cite{Chapelle2006,Zhu2009}. While semi-supervised techniques have shown promise in some applications, such as document classification \cite{Nigam2000}, peptide identification \cite{Kall2007} and cancer recurrence prediction \cite{Shi2011}, it has also been observed that these techniques may give performance worse than their supervised counterparts. See for instance \cite{Cozman2006,Cozman2003}, for an analysis of this problem, and \cite{Elworthy1994} for a practical example in part-of-speech tagging.
In these cases, disregarding the unlabeled data would lead to better performance. 

Some \cite{Goldberg2009,Wang2007a} have argued that \emph{agnostic} semi-supervised learning, which \cite{Goldberg2009} defines as semi-supervised learning that is at least no worse than supervised learning, can be achieved by cross-validation on the limited labeled data. 
Agnostic semi-supervised learning follows if we only use semi-supervised methods when their estimated cross-validation error is significantly lower than those of the supervised alternatives.
As the results of  \cite{Goldberg2009} indicate, this criterion may be too conservative: given the small amount of labeled data, a semi-supervised method will only be preferred if the difference in performance is very large. 
If the difference is less distinct, the supervised learner will always be preferred and we potentially ignore useful information from the unlabeled objects. 
Moreover, this cross-validation approach can be computationally demanding. 

\subsection*{Self-Learning}
A simple approach to semi-supervised learning is offered by the self-learning procedure \cite{McLachlan1975} also known as Yarowsky's algorithm \cite{Abney2004,Yarowsky1995} or retagging \cite{Elworthy1994}.
Taking any classifier, we first estimate its parameters on only  the labeled data. 
Using this trained classifier we label the unlabeled objects and add them, or potentially only those we are most confident about, with their predicted labels to the labeled training set. 
The classifier parameters are re-estimated using these labeled objects to get a new classifier. 
One iteratively applies this procedure until the predicted labels of the unlabeled data no longer change.

One of the advantages of this procedure is that it can be applied to any supervised classifier.
It has also shown practical success in some application domains, particularly document classification \cite{Nigam2000,Yarowsky1995}.
Unfortunately, the process of self-training can also lead to severely decreased performance, compared to the supervised solution \cite{Cozman2006,Cozman2003}. 
One can imagine that once an object is incorrectly labeled and added to the training set, its incorrect label may be reinforced, leading the solution away from the optimum. 
Self-learning is closely related to expectation maximization (EM) based approaches \cite{Abney2004}. Indeed, expectation maximization suffers from the same issues as self-learning \cite{Zhu2009}.
In Section~\ref{section:empiricalresults} we compare the proposed approach to self-learning for the least squares classifier.

\subsection*{Additional Assumptions}
Some semi-supervised methods leverage the unlabeled data by introducing assumptions that link properties of the features alone to properties of the label of an object given its features. Commonly used assumptions are the smoothness assumption: objects that are close in the feature space likely share the same label; the cluster assumption: objects in the same cluster share a label; and the low density assumption enforcing that the decision boundary should be in a region of low data density. 

The low-density assumption is used in entropy regularization \cite{Grandvalet2005} as well as for support vector classification in the transductive support vector machine (TSVM)  \cite{Joachims1999} and closely related semi-supervised SVM (S$^3$VM) \cite{Bennett1998,Sindhwani2006}. 
In these approaches an additional term is added to the objective function to push the decision boundary away from regions of high density. 
Several approaches have been put forth to minimize the resulting non-convex objective function, such as the convex concave procedure \cite{Collobert2006} and difference convex programming \cite{Sindhwani2006,Wang2007}.

In all these approaches to semi-supervised learning, a parameter controls the importance of the unlabeled points. When the parameter is correctly set, it is clear, as \cite{Wang2007a} claims, that TSVM is always no worse than supervised SVM. 
It is, however, non-trivial to choose this parameter, given that semi-supervised learning is most interesting in cases where we have limited labeled objects, making a choice using cross-validation very unstable. 
In practice, therefore, TSVM can also lead to performance worse than the supervised support vector machine, as well will also see in Section~\ref{subsection:crossvalidation}.

\subsection*{Safe Semi-supervised Learning}
\cite{Loog2010,Loog2014b} attempt to guard against the possibility of deterioration in performance by not introducing additional assumptions, but instead leveraging implicit assumptions already present in the choice of the supervised classifier.
These assumptions link parameters estimates that depend on labeled data to parameter estimates that rely on all data. 
By exploiting these links, semi-supervised versions of the nearest mean classifier and the linear discriminant are derived. 
Because these links are unique to each classifier, the approach does not generalize directly to other classifiers. 
The method presented here is similar in spirit, but unlike \cite{Loog2010,Loog2014b}, no explicit equations have to be formulated to link parameter estimates using only labeled data to parameter estimates based on all data. Moreover, our approach allows for theoretical analysis of the non-deterioration of the performance of the procedure.

Aside from the work by \cite{Loog2010,Loog2014b}, another attempt to construct a robust semi-supervised version of a supervised classifier has been made in \cite{Li2011}, which introduces the safe semi-supervised support vector machine (S$^4$VM). 
This method is an extension of S$^3$VM \cite{Bennett1998} which constructs a set of low-density decision boundaries with the help of the additional unlabeled data, and chooses the decision boundary, which, even in the worst-case, gives the highest gain in performance over the supervised solution. 
If the low-density assumption holds, this procedure provably increases classification accuracy over the supervised solution. 
The main difference with the method considered in this paper, however, is that we make no such additional assumptions. We show that even without these assumptions, safe improvements are possible for the least squares classifier.

\subsection*{Semi-supervised Least Squares}
While least squares classification has been widely used and studied \cite{Hastie2009, Poggio2003, Suykens1999}, little work has been done on applying semi-supervised learning to the least squares classifier specifically. For least squares regression, \cite{Little2002} describe an iterative method for handling missing outcomes that was formally proposed in \cite{Healy1956}. In the case of least squares regression, this method has some computational advantages over discarding the unlabeled data but its solution always coincides with the supervised solution. \cite{Shaffer1991} studied the value of knowing $\mathbb{E}[\mathbf{X}^T\mathbf{X}]$, where $\mathbf{X}$ is the $\Nlab \times \featdim$ design matrix containing the feature values for each observation. If we assume the number of unlabeled data points is large, this is similar to the semi-supervised situation. It is shown that if the size of the parameters is small compared to the noise, the variance of a procedure that plugs in $\mathbb{E}[\mathbf{X}^T\mathbf{X}]$ as the estimate of $\mathbf{X}^T\mathbf{X}$ has a lower variance than supervised least squares regression.  As the size of the parameters increases, this effect reverses. In fact, the paper demonstrates that in this semi-supervised setting no best linear unbiased estimator for the regression coefficients exists. In Section \ref{section:empiricalresults}, we compare our approach to using this plug-in estimate by substituting the matrix $\mathbf{X}^T\mathbf{X}$ by a version based on both labeled and unlabeled data. 
A similar plug-in procedure has been used by \cite{Fan2008} for linear discriminant analysis for dimensionality reduction which is closely related to least squares classification. Here the (normalized) total scatter matrix, which plays a similar role to the $\mathbf{X}^T\mathbf{X}$ matrix in least squares regression is exchanged with the more accurate estimate of the total scatter based on both labeled and unlabeled data.

\section{Implicitly Constrained Least Squares Classification}
\label{section:overview}
Given a limited set of $\Nlab$ labeled objects and a potentially large set of $\Nunl$ unlabeled objects, the goal of implicitly constrained least squares classification is to use the latter to improve the solution of the least squares classifier trained on just the labeled data. We start with a sketch of this approach, before discussing the details.

\begin{figure}[ht] 
\centering
\includegraphics[width=1\textwidth]{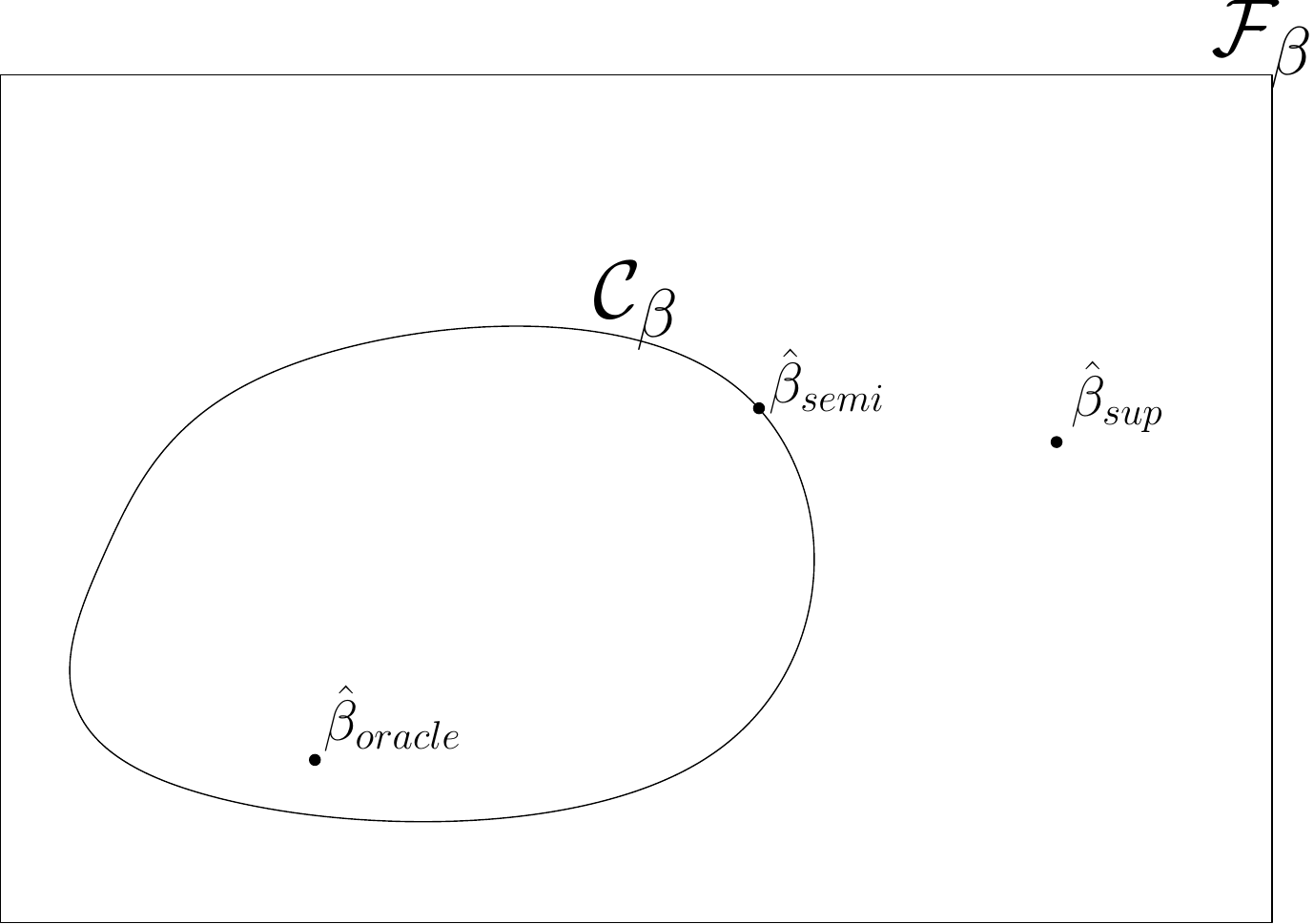}
\caption{A visual representation of implicitly constrained semi-supervised learning. $\mathcal{F}_{\beta}$ is the space of all linear models. $\hat{\beta}_{sup}$ denotes the solution given only a small amount of labeled data. $\Cb$ is the subset of the space which contains all the solutions we get when applying all possible (soft) labelings to the unlabeled data. $\hat{\beta}_{semi}$ is the solution in $\Cb$ that minimizes the loss on the labeled objects. $\hat{\beta}_{oracle}$ is the supervised solution if we would have the labels for all the objects.} \label{fig:constrainedsubset}
\end{figure}

Given the supervised least squares classifier, consider the hypothesis space of all possible parameter vectors, which we will denote as $\mathcal{F}_{\beta}$, see Figure \ref{fig:constrainedsubset}. Given a set of labeled objects, we can determine the supervised parameter vector $\hat{\beta}_{sup}$. Suppose we also have a potentially large number $\Nunl$ of unlabeled objects. Assume that every object has a label, it is merely unknown to us. If these labels were to be revealed, it is clear how the additional objects can improve classification performance: we estimate the least squares classifier using all the data to obtain the parameter vector $\hat{\beta}_{oracle}$. Since this estimate is based on more objects, we expect the parameter estimate to be better. These real labels are unknown, but we can still consider all possible labelings of unlabeled objects, and estimate corresponding parameters based on these imputed labelings. In this way, we get a set of possible parameters for our classifier, which form the set denoted by $\Cb \subset \mathcal{F}_{\beta}$. Clearly one of these labelings corresponds to the real, but unknown, labeling, so one of the parameter estimates in this set corresponds to the solution we would obtain using all the correct labels of both the labeled and unlabeled objects. Because these are the only possible classifiers when the true labels would be revealed, we propose to look within this set $\Cb$ for an improved semi-supervised solution. 

Two issues then remain: how do we choose the best parameters from this set and how do we find these without having to enumerate all possible labelings?

Looking at the first problem, we reiterate that the goal of semi-supervised learning is to find a good classification rule and, therefore, still the obvious way to evaluate this rule is by the loss on the labeled training points. In other words, we choose the classifier from the parameter set that minimizes the squared loss on the labeled points. We will denote this solution by $\hat{\beta}_{semi}$. Note this approach is rather different from other approaches to semi-supervised learning where the loss is adapted by including a term that depends on the unlabeled data points. In our formulation, the loss function is still the regular, supervised loss of our classification procedure.

As for the second issue, after relaxing the constraint that we need hard labels for the data points, we will see that the resulting optimization problem is, in fact, an instantiation of well-studied quadratic programming, which we solve using a simple gradient descent procedure.

\section{Method}
\label{section:method}

\subsection{Supervised Multivariate Least Squares Classification} \label{section:leastsquares}

Least squares classification \cite{Hastie2009,Rifkin2003} is the direct application of well-known ordinary least squares regression to a classification problem. A linear model is assumed and the parameters are minimized under squared loss. Let $\mathbf{X}$ be an $\Nlab \times (\featdim+1)$ design matrix with $\Nlab$ rows containing vectors of length equal to the number of features $\featdim$ plus a constant feature to encode the intercept. Vector $\textbf{y}$ denotes an $\Nlab \times 1$ vector of  class labels. We encode one class as $0$ and the other as $1$.  The multivariate version of the empirical risk function for least squares estimation is given by

\begin{equation} \label{squaredloss}
\hat{R}(\boldsymbol{\beta}) = \frac{1}{\Nlab} \left\|  \mathbf{X} \boldsymbol{\beta}-\mathbf{y} \right\| _2^2 \, .
\end{equation}
The well-known closed-form solution for this problem is found by setting the derivative with respect to $\boldsymbol{\beta}$ equal to $\textbf{0}$ and solving for $\boldsymbol{\beta}$, giving
\begin{equation} \label{olssolution}
\boldsymbol{\hat{\beta}}=\left(\mathbf{X}^T \mathbf{X}\right)^{-1} \mathbf{X}^T \mathbf{y} \, .
\end{equation}
In case $\textbf{X}^T \textbf{X}$ is not invertible (for instance when $L<(\featdim+1)$), a pseudo-inverse is applied. As we will see, the closed form solution to this problem will enable us to formulate our semi-supervised learning approach in terms of a standard quadratic programming problem, which is easy to optimize.

\subsection{Implicitly Constrained Least Squares Classification} \label{section:icls}

In the semi-supervised setting, apart from a design matrix $\textbf{X}$ and target vector $\textbf{y}$, an additional set of measurements $\textbf{X}_u$ of size $\Nunl \times (\featdim+1)$ \emph{without} a corresponding target vector $\textbf{y}_u$ is given. In what follows, $\mathbf{X}_e=\begin{bmatrix} \mathbf{X}^T  & \mathbf{X}_u^T \end{bmatrix}^T$ denotes the extended design matrix which is simply the concatenation of the design matrices of the labeled and unlabeled objects.

In the implicitly constrained approach, we incorporate the additional information from the unlabeled objects by searching within the set of classifiers that can be obtained by all possible labelings $\textbf{y}_u$, for the one classifier that minimizes the \emph{supervised} empirical risk function in Equation~\eqref{squaredloss}. This set, $\Cb$, is formed by the $\boldsymbol{\beta}$s that would follow from training supervised classifiers on all (labeled and unlabeled) objects going through all possible soft labelings for the unlabeled samples, i.e., using all $\textbf{y}_u \in [0,1]^{\Nunl}$. Since these supervised solutions have a closed form, this can be written as
\begin{equation} \label{constrainedregion}
\Cb := \left\{   \boldsymbol{\beta} = \left( {\XeT} {\Xe} \right)^{-1} {\XeT} \ye: \mathbf{y}_u \in [0,1]^{\Nunl} \right\} \, .
\end{equation}
The soft labeling provides both a relaxation for computational reasons as well as a strategy to deal with label uncertainty. We can interpret these fractions as a type of class posterior for the unlabeled objects. 
This constraint set $\Cb$, combined with the supervised loss that we want to optimize in Equation \eqref{squaredloss}, gives the following definition for implicitly constrained semi-supervised least squares classification:
\begin{equation}
\begin{aligned}
&\operatorname*{argmin}_{\boldsymbol{\beta} \in \Cb} & \hat{R}(\boldsymbol{\beta}) \, .
\end{aligned}
\end{equation}
Since $\boldsymbol{\beta}$ is fixed for a particular choice of $\textbf{y}_u$ and has a closed form solution, we can rewrite the minimization problem in terms of $\textbf{y}_u$ instead of $\boldsymbol{\beta}$:
\begin{equation} \label{icls}
\begin{aligned}
& \operatorname*{argmin}_{\mathbf{y}_u \in [0,1]^{\Nunl}} & \frac{1}{\Nlab}  \left\|  \X \left(\XeT \Xe \right)^{-1} \XeT \ye - \mathbf{y} \right\|_2^2 \, . \\ 
\end{aligned}
\end{equation}
The problem defined in Equation \eqref{icls} can be written in a standard quadratic programming  form:
\begin{equation}
\begin{aligned}
& \quad \min_{\mathbf{y}_u} \frac{1}{2} \textbf{y}_u^T  \textbf{Q}  \textbf{y}_u + \textbf{c}^T \textbf{y}_u   & \\
& \text{subject to:}  \quad \begin{bmatrix} \textbf{I}_{\Nunl}  \\ -\textbf{I}_{\Nunl} \end{bmatrix}  \textbf{y}_u \leq \begin{bmatrix} \textbf{1}_{\Nunl}  \\ \textbf{0}_{\Nunl} \end{bmatrix} & \\
\end{aligned}
\end{equation}
where\footnote{The published version of this paper contains a typo in this equation and the two equations that follow. We corrected this error here.}
\begin{equation}
\begin{aligned}
\textbf{Q} = & \frac{2}{\Nlab}  \textbf{X}_u \G \textbf{X}^T \textbf{X} \G \textbf{X}_u^T \,, &\\
\end{aligned} \nonumber
\end{equation}
and
\begin{equation}
\begin{aligned}
\textbf{c} = & \frac{2}{\Nlab} \textbf{X}_u \G \textbf{X}^T \textbf{X} \G \textbf{X}^T \textbf{y} & \\
& - \frac{2}{\Nlab}  \textbf{X}_u \G \textbf{X}^T  \textbf{y} \, . &\\
\end{aligned} \nonumber
\end{equation}
Here, $\textbf{I}_{\Nunl}$ denotes the $\Nunl \times \Nunl$ identity matrix and $\textbf{1}_{\Nunl}$ and $\textbf{0}_{\Nunl}$ denote column vectors of respectively ones and zeros.

Since the matrix \textbf{Q} is a product of a matrix and its transpose, it is guaranteed to be positive semi-definite. The problem is typically not positive definite because there are different labelings that will lead to one and the same minimum objective. 

The quadratic problem defined above can be solved using, for instance, an interior point method. We have found a gradient descent approach to be easier to apply. Taking the derivative with respect to $\textbf{y}_u$ and rearranging the terms we find
\begin{align}
\frac { \partial \hat{R} }{ \partial \textbf{y}_u } = & \ \frac{2}{\Nlab}  \textbf{X}_u \G \textbf{X}^T \textbf{X} \G \textbf{X}^T \textbf{y} \nonumber  \\
& +  \frac{2}{\Nlab}  \textbf{X}_u \G \textbf{X}^T  \textbf{X}  \G  \textbf{X}_u^T \textbf{y}_u  \nonumber \\
& -  \frac{2}{\Nlab}  \textbf{X}_u \G \textbf{X}^T  \textbf{y} \, . \nonumber
\end{align}

Because of its convexity, this problem can be solved efficiently using a quasi-Newton approach that allows for the  $[0,1]$ box bounds, such as L-BFGS-B \cite{Byrd1995}. Solving for $\mathbf{y}_u$ gives a labeling that we can use to construct the semi-supervised classifier using Equation \eqref{olssolution} by considering the imputed labels as the labels for the unlabeled data.

\section{Theoretical Results}
\label{section:theoreticalresults}
We will examine this procedure by considering it in a limited, yet illustrative setting. In this case we will, in fact, prove that our procedure will \emph{never} give a worse least squares estimate than the supervised solution.
Consider the case where we have just one feature $x$, a limited set of labeled instances and assume we know the probability density function of this feature $f_X$ exactly. 
This last assumption is similar to having unlimited unlabeled data and is also considered, for instance, in \cite{Sokolovska2008}. 
We consider a linear model with no intercept: $y = x \beta$ where $y$, without loss of generality, is set as $0$ for one class and $1$ for the other. 
For new data points, estimates $\hat{y}$ can be used to determine the predicted label of an object by using a threshold set at, for instance, $0.5$.

The expected squared loss, or risk, for this model is given by

\begin{equation} \label{eq:trueloss}
R^*(\beta) = \sum_{y \in \{0,1\}}{ \int_{-\infty}^{\infty}(x \beta - y)^2  f_{X,Y}(x,y)  \mathrm{d}x} \,,
\end{equation}
where $f_{X,Y}=P(y|x) f_X(x)$. We will refer to this as the joint density of $X$ and $Y$. Note, however, that this is not strictly a density, since it deals with the joint distribution over a continuous $X$ and a discrete $Y$. The optimal solution $\beta^\ast$ is given by the $\beta$ that minimizes this risk:

\begin{equation} \label{eq:bayesoptimal}
\beta^* = \operatorname*{argmin}_{\beta \in \mathbb{R}} R^*(\beta) \, .
\end{equation}
We will show the following result:
\begin{theorem}
\label{theorem:1d}
Given a linear model in 1D without intercept, $y = x\beta$, and $f_X$ known, the estimate obtained through implicitly constrained least squares always has an equal or lower risk than the supervised solution: $$R^\ast (\hat{\beta}_{semi}) \le R^\ast (\hat{\beta}_{sup}) \, .$$
In particular, given $1$ labeled sample, if $f_{X,Y}$ is continuous in the feature $X$ with bounded second moment and $f_{X,Y}(0,1) > 0$, then $$\mathbb{E}[R^*(\hat{\beta}_{semi})] < \mathbb{E}[R^*(\hat{\beta}_{sup})] \, .$$
\end{theorem}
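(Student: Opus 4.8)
The plan is to reduce everything to a one-dimensional geometric picture in the scalar parameter $\beta$. First I would make the idealized ``$f_X$ known'' setting precise by passing to the population limit of infinitely many unlabeled points, in which the normalizing quantity $\XeT\Xe$ concentrates on $\mathbb{E}[X^2]$ and the finite labeled contribution to the least squares solution becomes negligible. In this limit every soft labeling $g(x)\in[0,1]$ of the feature space produces the population solution $\mathbb{E}[Xg(X)]/\mathbb{E}[X^2]$, so the constraint set $\Cb$ collapses to an interval $[\beta_{\min},\beta_{\max}]$ with $\beta_{\max}=\mathbb{E}[X^2]^{-1}\int_0^\infty x f_X(x)\,\mathrm{d}x$ and $\beta_{\min}=\mathbb{E}[X^2]^{-1}\int_{-\infty}^0 x f_X(x)\,\mathrm{d}x$, obtained by the pointwise-optimal choices $g=\mathbf{1}_{x>0}$ and $g=\mathbf{1}_{x<0}$. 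The bounded second moment guarantees these endpoints are finite, and they clearly satisfy $\beta_{\min}\le 0\le\beta_{\max}$.

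The second step is to observe that both risks are one-dimensional convex parabolas. Completing the square gives $R^*(\beta)=\mathbb{E}[X^2]\,(\beta-\beta^*)^2+R^*(\beta^*)$, so $R^*$ is strictly increasing in the distance $|\beta-\beta^*|$, while the empirical risk $\hat R(\beta)$ is likewise a parabola whose unconstrained minimizer is $\hat\beta_{sup}$. Consequently $\hat\beta_{semi}$, the minimizer of $\hat R$ over the interval, is exactly the Euclidean projection (the clipping) of $\hat\beta_{sup}$ onto $[\beta_{\min},\beta_{\max}]$. The crucial structural fact is that the optimal $\beta^*=\mathbb{E}[XY]/\mathbb{E}[X^2]$ corresponds to the admissible labeling $g(x)=P(Y=1\mid x)$ and therefore lies inside the interval. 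Since projection onto a convex set is non-expansive and fixes points of that set, $|\hat\beta_{semi}-\beta^*|\le|\hat\beta_{sup}-\beta^*|$; combined with the monotonicity of $R^*$ in this distance, this yields the weak inequality $R^*(\hat\beta_{semi})\le R^*(\hat\beta_{sup})$ deterministically and for any number of labeled samples.

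For the strict statement with one labeled sample I would argue that, because the weak inequality already holds pointwise, it suffices to exhibit an event of positive probability on which the projection is strict, i.e. on which $\hat\beta_{sup}\notin[\beta_{\min},\beta_{\max}]$. With a single observation $(x_1,y_1)$ we have $\hat\beta_{sup}=y_1/x_1$; when $y_1=0$ this equals $0$ and lies in the interval, but when $y_1=1$ and $|x_1|$ is small enough, $1/x_1$ escapes the fixed finite interval, forcing a strict clip and hence $|\hat\beta_{semi}-\beta^*|<|\hat\beta_{sup}-\beta^*|$ and $R^*(\hat\beta_{semi})<R^*(\hat\beta_{sup})$. The hypotheses are exactly what make this event have positive probability: continuity of $f_{X,Y}$ together with $f_{X,Y}(0,1)>0$ gives a neighborhood of $x=0$ on which the label $1$ occurs with positive density, so $P\big(Y=1,\ 0<X<\varepsilon\big)>0$ for small $\varepsilon$. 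Taking expectations of the pointwise inequality and adding the strictly positive contribution from this event then gives $\mathbb{E}[R^*(\hat\beta_{semi})]<\mathbb{E}[R^*(\hat\beta_{sup})]$.

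The part I expect to require the most care is the first step: rigorously justifying that ``$f_X$ known'' collapses $\Cb$ to the clean interval $[\beta_{\min},\beta_{\max}]$, including controlling the vanishing labeled-data contribution to $\XeT\Xe$ and to the numerator as $\Nunl\to\infty$, and checking the degenerate edge cases (for instance $\mathbb{E}[X^2]=0$, or $\beta_{\max}=0$ when $f_X$ places no mass on the positive axis, in which case the escape event must instead be placed on the side $x_1<0$). The remaining steps are comparatively soft: the non-expansiveness of the projection and the parabola identity for $R^*$ do the real work, and the probabilistic step follows directly from continuity and the positivity assumption at the origin.
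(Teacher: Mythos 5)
Your proposal is correct and follows essentially the same route as the paper's own proof: you identify the constraint set $\Cb$ as the interval $\left[\mathbb{E}[X^2]^{-1}\int_{-\infty}^{0}x f_X(x)\,\mathrm{d}x,\ \mathbb{E}[X^2]^{-1}\int_{0}^{\infty}x f_X(x)\,\mathrm{d}x\right]$ containing $\beta^*$ (since $\beta^*$ corresponds to the admissible labeling $\mathbb{E}[y|x]$), observe that $\hat{\beta}_{semi}$ is the clipping of $\hat{\beta}_{sup}$ onto this interval, invoke convexity of $R^*$ about $\beta^*$ for the weak inequality, and establish strictness via the positive-probability event that a single sample with label $1$ and small $|x|$ yields $\hat{\beta}_{sup}=1/x$ outside the interval, exactly as the paper does. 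Your parabola identity and projection non-expansiveness merely make explicit what the paper argues geometrically, and the degenerate cases you flag (e.g.\ $\beta_{\max}=0$ or $\mathbb{E}[X^2]=0$) are in fact vacuous under the continuity and $f_{X,Y}(0,1)>0$ hypotheses.
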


\begin{proof}

Setting the derivative of \eqref{eq:trueloss} with respect to $\beta$ to $0$ and rearranging we get

\begin{eqnarray}
&\beta & = \left( \int_{-\infty}^{\infty} { x^2 f_X(x) \mathrm{d}x} \right)^{-1} \sum_{y \in \{0,1\}} \int_{-\infty}^{\infty} { x y f_{X,Y}(x,y) \mathrm{d}x } \\
& & =    \left( \int_{-\infty}^{\infty} { x^2 f_X(x) \mathrm{d}x} \right)^{-1}  \int_{-\infty}^{\infty} { x f_X(x) \sum_{y \in \{0,1\}} y P(y|x) \mathrm{d}x} \\
& & =   \left( \int_{-\infty}^{\infty} { x^2 f_X(x) \mathrm{d}x} \right)^{-1}  \int_{-\infty}^{\infty} { x f_X(x) \mathbb{E}[y|x] \mathrm{d}x} \, . \label{eqn:sslsolution}
\end{eqnarray}

In this last equation, since we assume $f_X(x)$ as given, the only unknown is the function $\mathbb{E}[y|x]$, the expectation of the label $y$, given $x$. Now suppose we consider every possible labeling of the unlimited number of unlabeled objects including fractional labels, that is, every possible function where $\mathbb{E}[y|x] \in [0,1]$. Given this restriction on $\mathbb{E}[y|x]$, the second integral in \eqref{eqn:sslsolution} becomes a re-weighted version of the expectation operation over $x$. By changing the choice of $\mathbb{E}[y|x]$ one can vary the value of this integral, but it will always be bounded on an interval on $\mathbb{R}$. It follows that all possible $\beta$'s also form an interval on $\mathbb{R}$, which is the constraint set $\Cb$. The optimal solution has to be in this interval, since it corresponds to a particular but unknown $\mathbb{E}[y|x]$.

\begin{knitrout}
\definecolor{shadecolor}{rgb}{0.969, 0.969, 0.969}\color{fgcolor}\begin{figure}

{\centering \includegraphics[width=\maxwidth]{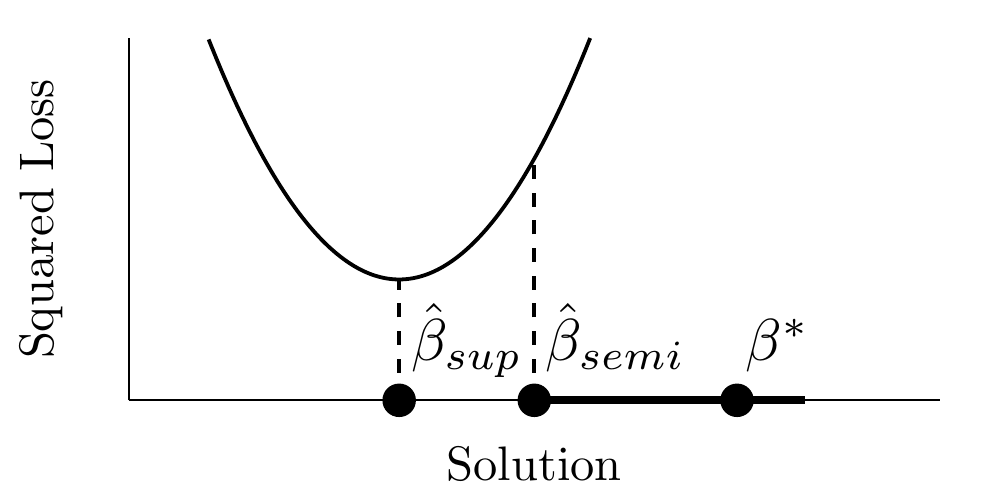} 

}

\caption{An example where implicitly constrained optimization improves performance. The supervised solution $\hat{\beta}_{sup}$ which minimizes the supervised loss (the solid curve), is not part of the interval of allowed solutions. The solution that minimizes this supervised loss within the allowed interval is $\hat{\beta}_{semi}$. This solution is closer to the optimal solution ${\beta}^{\ast}$ than the supervised solution $\hat{\beta}_{sup}$.}\label{fig:constrainedproblem}
\end{figure}

\end{knitrout}

Using the set of labeled data, we can construct a supervised solution $\hat{\beta}_{sup}$ that minimizes the loss on the training set of $\Nlab$ labeled objects (see Figure \ref{fig:constrainedproblem}):
  
\begin{equation} \label{supervisedsolution}
\hat{\beta}_{sup} = \operatorname*{argmin}_{\beta \in \mathbb{R}} \sum_{i=1}^{\Nlab} (x_i \beta - y_i)^2 \, .
\end{equation}

Now, either this solution falls within the constrained region, $\hat{\beta}_{sup} \in \Cb$ or not, $\hat{\beta}_{sup} \notin \Cb$, with different consequences:
  
  \begin{enumerate}
\item If $\hat{\beta}_{sup} \in \Cb$ there is a labeling of the unlabeled points that gives us the same value for $\beta$. Therefore, the solution falls within the allowed region and there is no reason to update our estimate. Therefore $\hat{\beta}_{semi}=\hat{\beta}_{sup}$.
\item Alternatively, if $\hat{\beta}_{sup} \notin  \Cb$, the solution is outside of the constrained region (as shown in Figure \ref{fig:constrainedproblem}): there is no possible labeling of the unlabeled data that will give the same solution as $\hat{\beta}_{sup}$. We then update the $\beta$ to be the $\beta$ within the constrained region that minimizes the loss on the supervised training set. As can be seen from Figure \ref{fig:constrainedproblem}, this will be a point on the boundary of the interval. Note that $\hat{\beta}_{semi}$ is now closer to $\beta^{*}$ than $\hat{\beta}_{sup}$. Since the true loss function $R^*(\beta)$ is convex  and achieves its minimum in the optimal solution, corresponding to the true labeling, the risk of our semi-supervised solution will always be equal to or lower than the loss of the supervised solution.
\end{enumerate}

Thus, the proposed update either improves the estimate of the parameter $\beta$ or it does not change the supervised estimate. In no case will the semi-supervised solution be worse than the supervised solution, in terms of the expected squared loss. This concludes the proof of the first part of the theorem.

The last part of the theorem gives a general condition when, in expectation, our semi-supervised approach will outperform the supervised learner. Because $\hat{\beta}_{semi}$ will never be worse than $\hat{\beta}_{sup}$, to prove this we only need to show that for some observation of a labeled point with positive $f_{X,Y}(x,y)>0$, the estimated $\hat{\beta}_{sup}$ is outside of the interval $\Cb$, in which case $R^*(\hat{\beta}_{semi}) < R^*(\hat{\beta}_{sup})$. 

If we observe an object labeled $1$ with feature value $x$, the corresponding estimate $\hat{\beta}_{sup}=\tfrac{1}{x}$. Since the improvement in loss will only result if this estimate is not in the constrained region, we need to show that

\begin{equation} \label{eq:condition}
P(\tfrac{1}{x} \notin \Cb,y=1)>0 \, .
\end{equation}

To do this, consider the bounds of the interval $\Cb$. These most extreme values are obtained whenever all negative values of $x$ are assigned label $0$ while the positive $x$ get labels $1$, or the other way around. From \eqref{eqn:sslsolution} and writing $\mathbb{E}(X^2)=\int_{-\infty}^{\infty} { x^2 f_X(x) \mathrm{d}x}$ we find the interval is given by

\begin{equation} \label{eq:condition2}
\Cb=\left[ \frac{\int_{-\infty}^{0}{x f_X(x) \mathrm{d}x }}{\mathbb{E}(X^2)},\frac{\int_{0}^{\infty}{x f_X(x)  \mathrm{d}x }}{\mathbb{E}(X^2)} \right] \, .
\end{equation}
Combining this with \eqref{eq:condition}, we get the condition

\begin{equation} \label{eq:condition3}
P \left( \frac{\mathbb{E}(X^2)}{\int_{-\infty}^{0}{x f_X(x)  \mathrm{d}x }} < x < 0 \vee 0 < x < \frac{\mathbb{E}(X^2)}{\int_{0}^{\infty}{x f_X(x)  \mathrm{d}x }},y=1 \right) > 0 \, .
\end{equation}
Since $f_{X,Y}$ is assumed to be continuous, $\mathbb{E}[X^2]>0$, and the lower bound in this equation is always smaller than $0$, while the upper bound is always larger than $0$. The assumption of the continuity of $f_{X,Y}$ ensures that  \eqref{eq:condition3} holds whenever $f_{X,Y}(0,1)>0$.  The property $f_{X,Y}(0,1)>0$ is satisfied by many distributions of the data. The result, therefore, indicates, that in the case of $1$ labeled sample improvement is not only possible, but will occur in many cases. When we have multiple labeled examples, this effect will likely become smaller. This makes sense: the more labeled data we have to estimate the parameter, the smaller the impact of the unlabeled objects will be. \hfill
\end{proof}

\section{Empirical Results} 
\label{section:empiricalresults}
To study the properties of the proposed semi-supervised approach to least squares classification, we compare how this approach fares against supervised least squares classification without the constraints. 

For comparison we include two alternative semi-supervised approaches and an oracle solution: 
  \paragraph{Self-Learning} Using a simple procedure proposed by \cite{McLachlan1975}, among others, the supervised least squares classifier is updated iteratively by using its class predictions on the unlabeled objects as the labels for the unlabeled objects in the next iteration. This is done until convergence.

\paragraph{Updated Second Moment Least Squares (USM)} In this approach we replace the second moment matrix $\mathbf{X}^T \mathbf{X}$ with an appropriately scaled matrix $\XeT \Xe$ similar to the estimator studied in \cite{Shaffer1991}:
  $$
  \boldsymbol{\hat{\beta}}_{USM} = \left( \tfrac{\Nlab}{\Nlab+\Nunl} \XeT \Xe \right)^{-1} \mathbf{X}^T \mathbf{y}
$$
  where $\Xe$ and $\mathbf{y}$ are centered. This centering ensures that results do not depend on the particular encoding of the labels used. We will refer to this as updated second moment least squares (USM) classification. 

\paragraph{Oracle} The performance of the least squares classifier if all unlabeled objects were labeled as well. This serves as the unattainable upper bound on the performance of any semi-supervised learner.

A description of the datasets used for our experiments is given in Table \ref{table:datasets}. We use datasets from both the UCI repository \cite{Lichman2013} and from the benchmark datasets proposed by \cite{Chapelle2006}. While the benchmark datasets proposed in \cite{Chapelle2006} are useful, in our experience, the results on these datasets are very homogeneous because of the similarity in their dimensionality and their low Bayes errors. The UCI datasets are more diverse both in terms of the number of objects and features as well as the nature of the underlying problems. Taken together, this collection allows us to investigate the properties of our approach for a wide range of problems.
All the code used to run the experiments is available from the first author's website.

\begin{table}[ht]
\centering
\caption{Description of the datasets used in the experiments. PCA99 refers to the number of principal components required to retain at least 99\% of the variance. Majority refers to the proportion of the number of objects from the largest class} 
\label{table:datasets}

\begin{tabular}{lrrrrr}

  \toprule
Dataset & Objects & Features & PCA99 & Majority & Source \\ 
  \midrule
\textsc{Haberman} & 306 &   3 &   3 & 0.74 & \cite{Lichman2013} \\ 
  \textsc{Ionosphere} & 351 &  33 &  30 & 0.64 & \cite{Lichman2013} \\ 
  \textsc{Parkinsons} & 195 &  22 &  12 & 0.75 & \cite{Lichman2013} \\ 
  \textsc{Diabetes} & 768 &   8 &   8 & 0.65 & \cite{Lichman2013} \\ 
  \textsc{Sonar} & 208 &  60 &  43 & 0.53 & \cite{Lichman2013} \\ 
  \textsc{SPECT} & 267 &  22 &  21 & 0.79 & \cite{Lichman2013} \\ 
  \textsc{SPECTF} & 267 &  44 &  37 & 0.79 & \cite{Lichman2013} \\ 
  \textsc{Transfusion} & 748 &   4 &   3 & 0.76 & \cite{Lichman2013} \\ 
  \textsc{WDBC} & 569 &  30 &  17 & 0.63 & \cite{Lichman2013} \\ 
  \textsc{Mammography} & 961 &   9 &   9 & 0.54 & \cite{Lichman2013} \\ 
  \textsc{Digit1} & 1500 & 241 & 221 & 0.51 & \cite{Chapelle2006} \\ 
  \textsc{USPS} & 1500 & 241 & 183 & 0.80 & \cite{Chapelle2006} \\ 
  \textsc{COIL2} & 1500 & 241 & 114 & 0.50 & \cite{Chapelle2006} \\ 
  \textsc{BCI} & 400 & 117 &  45 & 0.50 & \cite{Chapelle2006} \\ 
  \textsc{g241c} & 1500 & 241 & 235 & 0.50 & \cite{Chapelle2006} \\ 
  \textsc{g241d} & 1500 & 241 & 235 & 0.50 & \cite{Chapelle2006} \\ 
   \bottomrule
\end{tabular}
\end{table}

\subsection{Peaking Behaviour in Semi-supervised Least Squares}
With fewer than $\featdim$ samples, the \emph{supervised} least squares classifier that utilizes a pseudo-inverse is known to exhibit a peaking phenomenon, as described in \cite{Opper1996,Raudys1998}: Starting from a single observation, expected classification errors generally decrease as we add more data before errors increase again to reach a maximum approximately when the number of features is equal to the number of observations. This phenomenon can also be observed in the semi-supervised setting. Figures~\ref{fig:peaking} and \ref{fig:peakingzoom} show learning curves of the methods considered here, using $10$ labeled training objects and an increasing number of unlabeled objects. Performance is evaluated on objects that were not in the labeled or unlabeled set. The Oracle classifier indicates the mean error when we do have the labels for the unlabeled objects and therefore corresponds to the peaking phenomenon in the supervised case. In the supervised case, several proposals have been done to ameliorate this peaking behaviour, such as feature selection, regularization, removing objects, injecting noise in the features, or adding redundant features \cite{Skurichina1999}. The semi-supervised learners suffer from the same peaking phenomenon, except that unlike the Oracle, USM and ICLS do not fully recover from the initial increase in classification error.

We have no full explanation for the observed peaking behaviour in the semi-supervised setting. Even in the supervised setting the behaviour remains elusive. The two observation we do make are: 1. that the peak occurs at the same location for both the supervised and semi-supervised scenarios, which is likely due to the dependence of all methods on the inverse of $\XeT \Xe$ and 2. that the subspace defined by the input data is the defining characteristic for the location of the peak.

This peaking behaviour is not the primary topic of this work and in the remainder we will restrict our attention to the case where there are enough labeled objects such that the matrix $\mathbf{X}^T \mathbf{X}$ is invertible. 

\begin{knitrout}
\definecolor{shadecolor}{rgb}{0.969, 0.969, 0.969}\color{fgcolor}\begin{figure}
\includegraphics[width=\maxwidth]{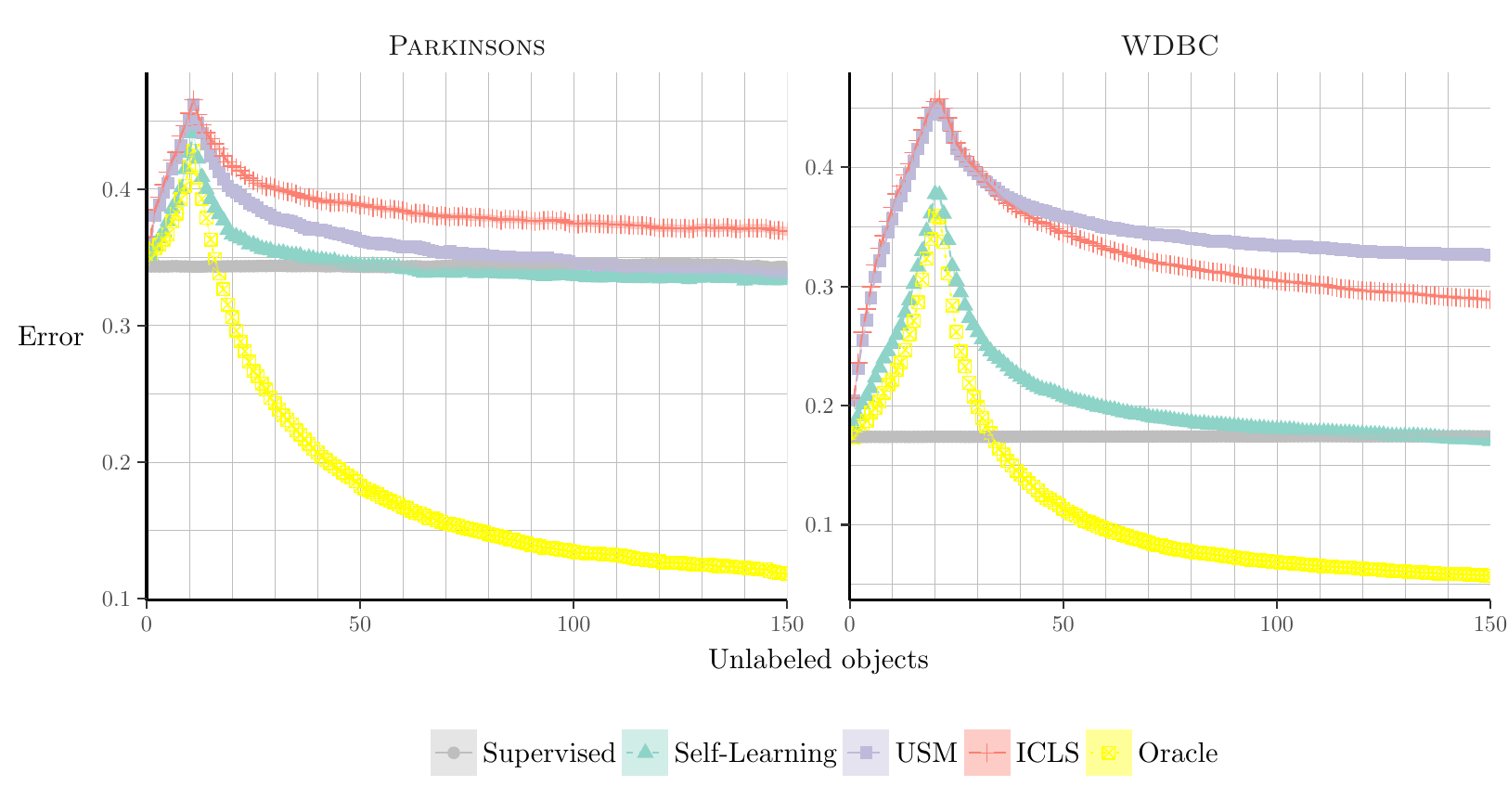} \caption[Peaking phenomenon in Semi-supervised Least Squares Classification]{Peaking phenomenon in Semi-supervised Least Squares Classification. The lines indicate the Mean classification error for $\Nlab=\max(\featdim+5,20)$ and $1000$ repeats. The shaded areas indicate $+/-$ the standard error of the mean, which are so small in this case, they are barely perceptible.}\label{fig:peaking}
\end{figure}

\end{knitrout}

\begin{knitrout}
\definecolor{shadecolor}{rgb}{0.969, 0.969, 0.969}\color{fgcolor}\begin{figure}
\includegraphics[width=\maxwidth]{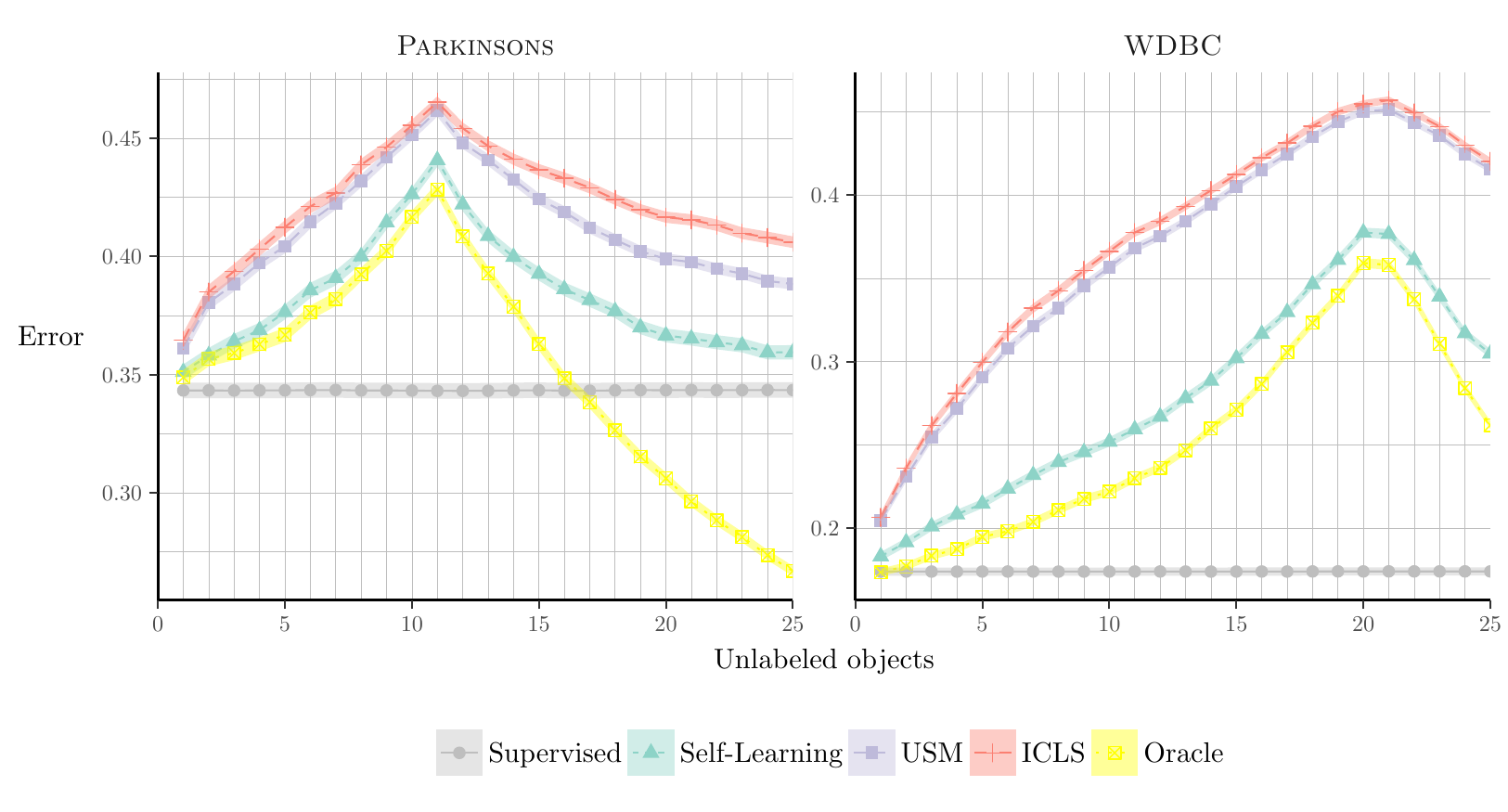} \caption{Zoomed in version of Figure~\ref{fig:peaking}}\label{fig:peakingzoom}
\end{figure}

\end{knitrout}

\subsection{Comparison of Learning Curves}
We study the behavior of the expected classification error of the ICLS procedure for different sizes of the unlabeled set. This statistic has two desired properties. First of all it should never be higher than the expected classification error of the supervised solution, which is based on only the labeled data. Secondly, the expected classification error should not increase as we add more unlabeled data. A semi-supervised classifier that has both these properties can be used safely, since adding unlabeled data and continuing to add more unlabeled data will never decrease performance, on average. 

Experiments were conducted as follows. For each dataset, $\Nlab$ labeled points were randomly chosen, where we make sure to sample at least 1 object from each of the two classes. Since the peaking phenomenon described in the previous section is not main topic of this work, we avoid this situation by considering the setting in which the labeled design matrix is of full rank, which we ensure by setting $\Nlab=\featdim+5$, the dimensionality of the dataset plus five observations. For all datasets we ensure a minimum of $\Nlab=20$ labeled objects.

Next, we create unlabeled subsets of increasing size $\Nunl=[2,4,8,...,1024]$ by randomly selecting points from the original dataset without replacement. The classifiers are trained using these subsets and the classification performance is evaluated on the remaining objects. Since the test set decreases in size as the number of unlabeled objects increases, the standard error slightly increases with the number of unlabeled objects.

The results of these experiments are shown in Figure \ref{fig:errorcurves}. We report the mean classification error as well as the standard error of this mean. As can be seen from the tight confidence bands, this offers an accurate estimate of the expected classification error.

This procedure of sampling labeled and unlabeled points is repeated $1000$ times and the average classification error (Figure \ref{fig:errorcurves}) and squared loss (Figure \ref{fig:losscurves}) on the test set is determined. The latter is done to evaluate whether the approach is effective in increasing generalization performance in terms of the loss used in estimating the classifier. This is the same loss that we consider in Theorem \ref{theorem:1d}. Even though in applications the ultimate goal may typically be classification performance, this allows us to study whether problems occur because of the optimization itself, or because of the link between the surrogate loss used and the classification error.

\begin{knitrout}
\definecolor{shadecolor}{rgb}{0.969, 0.969, 0.969}\color{fgcolor}\begin{figure}
\includegraphics[width=\maxwidth]{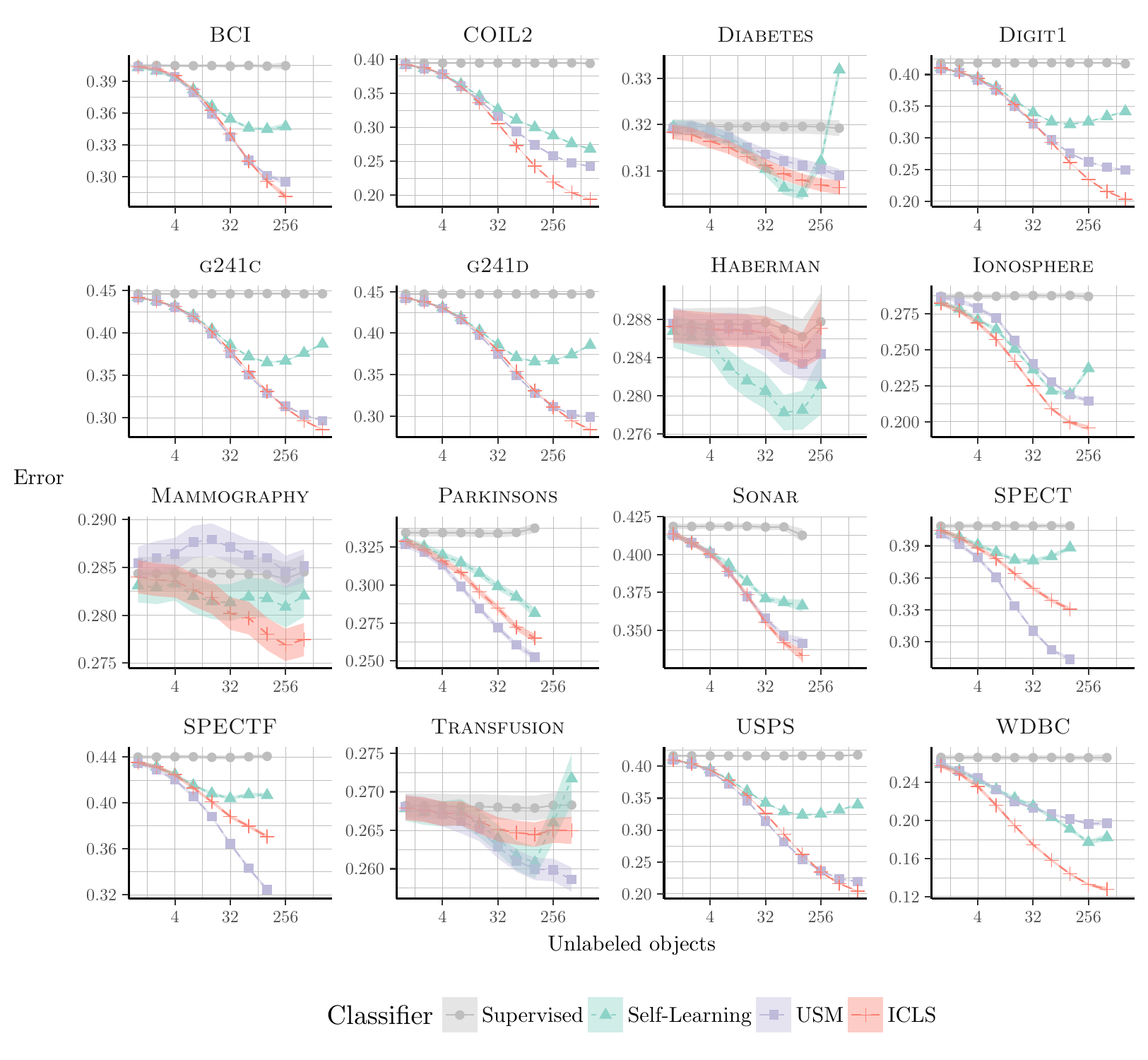} \caption[Mean classification error for $\Nlab=\max(\featdim+5,20)$ and $1000$ repeats]{Mean classification error for $\Nlab=\max(\featdim+5,20)$ and $1000$ repeats. The shaded areas indicate $+/-$ the standard error of the mean, which are so small in some cases, they are barely perceptible.}\label{fig:errorcurves}
\end{figure}

\end{knitrout}

\begin{knitrout}
\definecolor{shadecolor}{rgb}{0.969, 0.969, 0.969}\color{fgcolor}\begin{figure}
\includegraphics[width=\maxwidth]{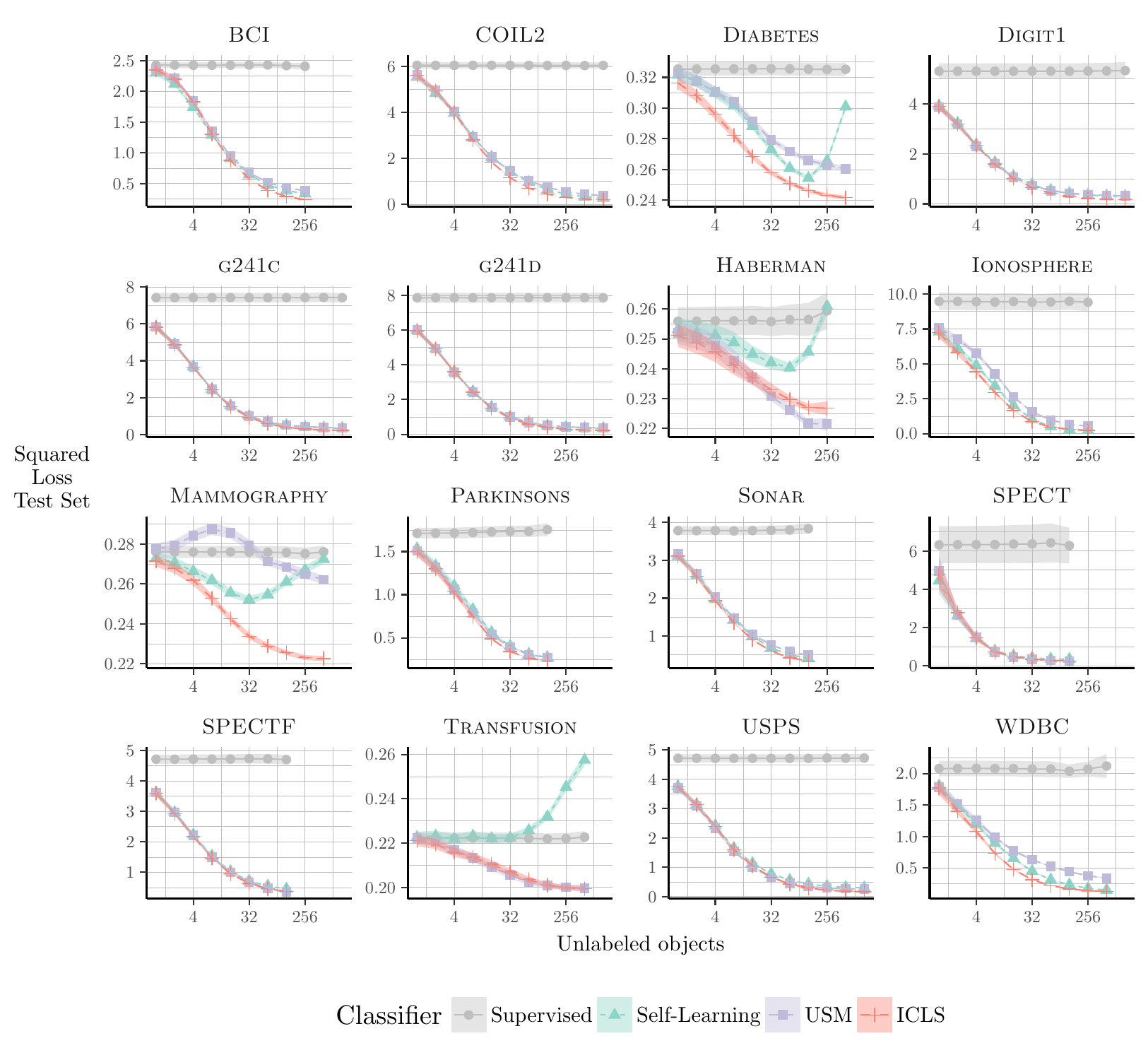} \caption[Mean squared loss on the test set for $\Nlab=\max(\featdim+5,20)$ and $1000$ repeats]{Mean squared loss on the test set for $\Nlab=\max(\featdim+5,20)$ and $1000$ repeats. The shaded areas indicate $+/-$ the standard error of the mean, which are so small in some cases, they are barely perceptible.}\label{fig:losscurves}
\end{figure}

\end{knitrout}

We find that, generally, the ICLS procedure has monotonically decreasing error curves as the number of unlabeled samples increases, unlike self-learning. On the \textsc{Diabetes} and \textsc{Transfusion} datasets, the performance of self-learning becomes worse than the supervised solution when more unlabeled data is added, while the ICLS classifier again exhibits a monotonic decrease of the average error rate. The USM classifier performs well on most datasets except for the \textsc{Mammography} dataset, where both in terms of average error rates and squared loss, performance is worse than the supervised classifier.

When we compare the error curves and the loss curves, the non-monotonically decreasing losses for the self-learner correspond to increased errors. In general, however, similar losses for different classifiers can give rise to different behaviours in terms of error rates.

\subsection{Benchmark performance} \label{subsection:crossvalidation}
We now consider the performance of these classifiers in a cross-validation setting. The experiment is set up as follows. For each dataset, the objects are randomly divided into $10$ folds. We iteratively go through the folds using $1$ fold as validation set, and the other $9$ as the training set. From this training set, we then randomly select $\Nlab=\max(\featdim+5,20)$ labeled objects, as in the previous experiment, and use the rest as unlabeled data. After predicting labels for the validation set for each fold, the classification error is then determined by comparing the predicted labels to the real labels. This is repeated $100$ times, while randomly assigning objects to folds in each iteration.

The cross-validation procedure used here is slightly different from that described in \cite{Chapelle2006}, to make it more closely relate to the cross-validation procedure that is usually employed in supervised learning. More specifically, our procedure ensures the validation sets are independent (non-overlapping), such that, after going over all the folds, each object is in the validation set only once. This is different from the procedure in \cite{Chapelle2006}, were the authors ensure the \emph{labeled} sets are non-overlapping. We have not found a qualitative difference in the error rates, however, when using the procedure proposed in \cite{Chapelle2006}. The advantage of the procedure employed here is that every object gets a single predicted label, allowing for the direct comparison of predictions of different classifiers.

\begin{table}[t]
\caption{Results for the Least Squares Classifier. Average 10-fold cross-validation error and (between parentheses) number of times the error of the semi-supervised classifier is higher than the supervised error for $100$ repeats. Indicated in $\mathbf{bold}$ is which semi-supervised classifier has lowest average error. A Wilcoxon signed rank test at $0.01$ significance level is done to determine whether a semi-supervised classifier is significantly worse than the supervised classifier, indicated by \underline{underlined} values.} \label{table:cvresults}
\centering
\small
\begin{tabular}{llllll}
  \toprule
Dataset & Supervised & Self-Learning & USM & ICLS & Oracle \\ 
  \midrule
\textsc{Haberman} & 0.29 & \textbf{0.28 (33)} & 0.28 (42) & 0.29 (24) & 0.26 (11) \\ 
  \textsc{Ionosphere} & 0.29 & 0.24 (1) & 0.22 (1) & \textbf{0.19 (0)} & 0.13 (0) \\ 
  \textsc{Parkinsons} & 0.34 & 0.29 (5) & \textbf{0.25 (3)} & 0.26 (1) & 0.12 (0) \\ 
  \textsc{Diabetes} & 0.32 & \underline{0.34 (83)} & 0.31 (31) & \textbf{0.31 (7)} & 0.23 (0) \\ 
  \textsc{Sonar} & 0.42 & 0.37 (5) & 0.34 (3) & \textbf{0.33 (1)} & 0.25 (0) \\ 
  \textsc{SPECT} & 0.41 & 0.39 (28) & \textbf{0.28 (0)} & 0.33 (1) & 0.18 (0) \\ 
  \textsc{SPECTF} & 0.43 & 0.40 (14) & \textbf{0.31 (0)} & 0.36 (2) & 0.23 (0) \\ 
  \textsc{Transfusion} & 0.27 & \underline{0.28 (63)} & \textbf{0.26 (30)} & 0.27 (25) & 0.23 (2) \\ 
  \textsc{WDBC} & 0.27 & 0.18 (0) & 0.20 (2) & \textbf{0.13 (0)} & 0.04 (0) \\ 
  \textsc{Mammography} & 0.28 & 0.28 (28) & 0.28 (54) & \textbf{0.27 (14)} & 0.20 (0) \\ 
  \textsc{Digit1} & 0.42 & 0.34 (0) & 0.25 (0) & \textbf{0.20 (0)} & 0.06 (0) \\ 
  \textsc{USPS} & 0.42 & 0.34 (0) & 0.22 (0) & \textbf{0.20 (0)} & 0.09 (0) \\ 
  \textsc{COIL2} & 0.39 & 0.27 (0) & 0.24 (0) & \textbf{0.19 (0)} & 0.10 (0) \\ 
  \textsc{BCI} & 0.41 & 0.35 (1) & 0.30 (0) & \textbf{0.28 (0)} & 0.16 (0) \\ 
  \textsc{g241c} & 0.45 & 0.39 (0) & 0.30 (0) & \textbf{0.29 (0)} & 0.14 (0) \\ 
  \textsc{g241d} & 0.45 & 0.39 (0) & 0.30 (0) & \textbf{0.29 (0)} & 0.13 (0) \\ 
   \bottomrule

\end{tabular}
\end{table}

The results shown in Table \ref{table:cvresults} tell a similar story to those in the previous experiment. Most importantly for the purposes of this paper, ICLS, in general, offers solutions that give at least no higher expected classification error than the supervised procedure. 
On many of these datasets, the self-learning approach seems to share this property. However, if we look at for how many of the cross-validation repeats the ICLS and self-learning give lower error than the supervised solution, there is a clear difference. The self-learning solution gives a higher error on more of the repeats than ICLS, for all of the datasets.

The results also show that unlabeled information is of use. Particularly on the last six datasets, ICLS and USM offers large improvement in classification accuracy over the supervised solution. The differences in performance between ICLS and self-learning can also be quite substantial, where ICLS outperforms self-learning on most of the datasets.
USM performs well on many of the datasets, especially when we consider how simple and computationally efficient this procedure is.

\begin{table}[t]
\caption{Results for the Support Vector Classifier. Average 10-fold cross-validation error and (between parentheses) number of times the error of the semi-supervised classifier is higher than the supervised error for $100$ repeats. Indicated in $\mathbf{bold}$ is which semi-supervised classifier has lowest average error. A Wilcoxon signed rank test at $0.01$ significance level is done to determine whether a semi-supervised classifier is significantly worse than the supervised classifier, indicated by \underline{underlined} values.} \label{table:cvresults-svm}
\centering
\begin{tabular}{llllll}
\toprule
Dataset & Supervised & Self-Learning & TSVM & Oracle \\ 
  \midrule
\textsc{Haberman} & 0.29 & \textbf{0.29 (34)} & \underline{0.32 (92)} & 0.26 (8) \\ 
  \textsc{Ionosphere} & 0.17 & \underline{0.18 (81)} & 0.17 (51) & 0.11 (0) \\ 
  \textsc{Parkinsons} & 0.22 & \textbf{0.22 (32)} & 0.22 (60) & 0.14 (0) \\ 
  \textsc{Diabetes} & 0.31 & 0.31 (40) & \textbf{0.28 (7)} & 0.23 (0) \\ 
  \textsc{Sonar} & 0.26 & 0.26 (53) & \textbf{0.25 (33)} & 0.25 (25) \\ 
  \textsc{SPECT} & 0.30 & 0.28 (13) & \textbf{0.25 (3)} & 0.18 (0) \\ 
  \textsc{SPECTF} & 0.30 & 0.29 (28) & \textbf{0.28 (29)} & 0.21 (0) \\ 
  \textsc{Transfusion} & 0.27 & 0.27 (59) & \underline{0.29 (96)} & 0.23 (0) \\ 
  \textsc{WDBC} & 0.06 & 0.06 (53) & \textbf{0.05 (30)} & 0.03 (0) \\ 
  \textsc{Mammography} & 0.27 & 0.28 (60) & \textbf{0.25 (3)} & 0.20 (0) \\ 
  \textsc{Digit1} & 0.08 & \underline{0.08 (85)} & \textbf{0.06 (1)} & 0.05 (0) \\ 
  \textsc{USPS} & 0.14 & 0.13 (17) & \textbf{0.12 (5)} & 0.11 (1) \\ 
  \textsc{COIL2} & 0.16 & \underline{0.16 (75)} & \underline{0.19 (100)} & 0.09 (0) \\ 
  \textsc{BCI} & 0.28 & \underline{0.29 (70)} & \underline{0.36 (99)} & 0.17 (0) \\ 
  \textsc{g241c} & 0.22 & \underline{0.23 (87)} & \textbf{0.17 (0)} & 0.16 (0) \\ 
  \textsc{g241d} & 0.23 & \underline{0.24 (90)} & \textbf{0.17 (0)} & 0.16 (0) \\ 
   \bottomrule

\end{tabular}
\end{table}

While we are interested in a semi-supervised procedure that outperforms the supervised least squares classifier, for comparison we repeated the experiment for the (linear) supervised SVM, self-learning applied to the SVM and the Transductive SVM. We used the SVM and TSVM implementations of \cite{Sindhwani2006},  setting the $L_2$ regularization parameter to $\lambda=1$ and the influence parameter of the unlabeled data to $1$, as was also done in \cite{Sindhwani2006}. The experiment is set up in the same way as the one in Table \ref{table:cvresults}. The results are shown in Table~\ref{table:cvresults-svm}.

On many of the datasets, the supervised support vector classifier has a lower error than the supervised least squares classifier, due to the use of a regularization term in the SVM implementation, which we do not include in our analysis and which makes the results difficult to compare directly to the results in Table~\ref{table:cvresults}. Self-learning performs worse compared to the least squares setting, which may be a consequence of the supervised solution already being a decent solution on some of these datasets. The Transductive SVM offers some improvements over the supervised solution. Compared to ICLS, however, the TSVM gives worse performance than the supervised solution on many more datasets and many more repeats, the exact behaviour we attempted to avoid when constructing ICLS.

\section{Discussion}
\subsection*{From Theory to Empirical Results}
The results presented in this paper are rather promising, especially in the light of the negative theoretical performance results presented in the literature \cite{Cozman2006}. The result in Theorem \ref{theorem:1d}, to start with, indicates the proposed procedure is in some way robust against reduction in performance. The strong result of this theorem, stating that performance never gets worse, holds in the 1D case with unlimited unlabeled data and no intercept in the model. A slightly weaker result, that performance does not degrade on average may still hold without these assumptions. This last statement is corroborated by the empirical results showing improvements in averaged squared errors for ICLS throughout.

The results in the previous section also indicate that such improved results hold in terms of the misclassification error, at least on this collection of datasets. These empirical observations are encouraging because we are often interested in misclassification error and not the squared loss that was considered in Theorem \ref{theorem:1d}. Furthermore the experiments were carried out in the multivariate setting with an intercept term using limited unlabeled data, rather than the unlimited unlabeled data setting considered in the theorem. This indicates that minimizing the supervised loss over the subset $\Cb$, leads to a semi-supervised learner with desirable behavior, both theoretically in terms of risk and empirically in terms of classification error.

\subsection*{Robustness}
The method considered in this work is different from most previous work in semi-supervised learning in that it is inherently robust against a decrease in performance. 
The robustness of the method comes from the fact that we do not accept solutions that do not work on the labeled data. The goal of semi-supervised learning is to improve supervised techniques using the additional information inherent in the additional unlabeled objects. Previous approaches have done this by changing the loss function that is being optimized, in particular by introducing an extra term corresponding to assumptions about the unlabeled data. The loss function then becomes a mixture between the supervised objective and an unsupervised objective, which may lead to decreased performance as we observed in Table~\ref{table:cvresults-svm}. If the goal is classification, we propose that the loss function should remain the supervised loss function. The unlabeled objects are merely used to introduce constraints on the possible solutions to this loss function, but do not change its functional form.

\subsection*{Assumptions}
Most other semi-supervised techniques rely on introducing useful assumptions that link information about the distribution of the features $P_X$ to the posterior of the classes $P_{Y|X}$. It has been argued that, for discriminative classifiers, semi-supervised learning is impossible without these additional assumptions about the link between labeled and unlabeled objects \cite{Seeger2001,Singh2008}. ICLS, however, is both a discriminative classifier and no explicit additional assumptions about this link are made.  Any assumptions that are present follow, implicitly, from the choice of squared loss as the loss function and from the chosen hypothesis space.

In fact, additional assumptions may actually be at the root of the problem: clearly if such an additional assumption is correct, a semi-supervised classifier can gain from it, but if the assumption is incorrect, degraded performance may ensue.  What we leverage in our approach are the implicit assumptions that are, in a sense, intrinsic to the supervised least squares classifier. 

One could argue that constraining the solutions to $\Cb$ is an assumption as well. It corresponds to a very weak assumption about the supervised classifier: that it will improve when we add additional labeled data. This is generally assumed in the supervised setting as well. The lack of additional assumptions has another advantage: no additional hyperparameter value needs to be selected that controls the importance of the unlabeled data for the results in Sections \ref{section:theoreticalresults} and \ref{section:empiricalresults} to hold as ICLS acts as a type of data dependent regularization.

Note that the solution provided by self-learning is, by construction, also in the constrained subset $\Cb$. The difference with ICLS is that in ICLS the choice of estimate from $\Cb$ is based on information of the labeled objects only, while self-learning also uses the imputed labels on the unlabeled objects. This may lead to self-deception: if the imputed labels are wrong, a good fit for these wrongly imputed labels does not necessarily lead to an improved $\boldsymbol{\beta}$. In fact, it might lead to worse choices as shown in the results.

\subsection*{Time Complexity}
In terms of the number of features, ICLS scales in the same way as the supervised least squares solution, where the main bottleneck is the calculation of $(\XeT \Xe)^{-1}$. Furthermore, the quadratic programming formulation of ICLS presented in Section \ref{section:method} allows one to use the standard and constantly improving tools from convex optimization to find the ICLS estimate. Unfortunately one has to go from a convex problem with $\featdim+1$ variables in the supervised case to a constrained convex problem with $\Nunl$ variables for ICLS. For very large $\Nunl$, this may not currently be computationally feasible. Further insight in the general nature of the semi-supervised solutions that one obtains can lead to more dedicated and potentially better scalable methods to solve the quadratic programming problem we have to deal with in our approach.

Compared to ICLS, self-learning seems more favorable in terms of computational cost. Self-learning usually converges in a few iterations, where each iteration has at most the cost of one supervised least squares estimation. In our implementations, however, self-learning and ICLS had similar training times (Figure \ref{fig:timecurves}). USM with its simple closed form solution has much lower training times and performs surprisingly well.

\begin{knitrout}
\definecolor{shadecolor}{rgb}{0.969, 0.969, 0.969}\color{fgcolor}\begin{figure}
\includegraphics[width=\maxwidth]{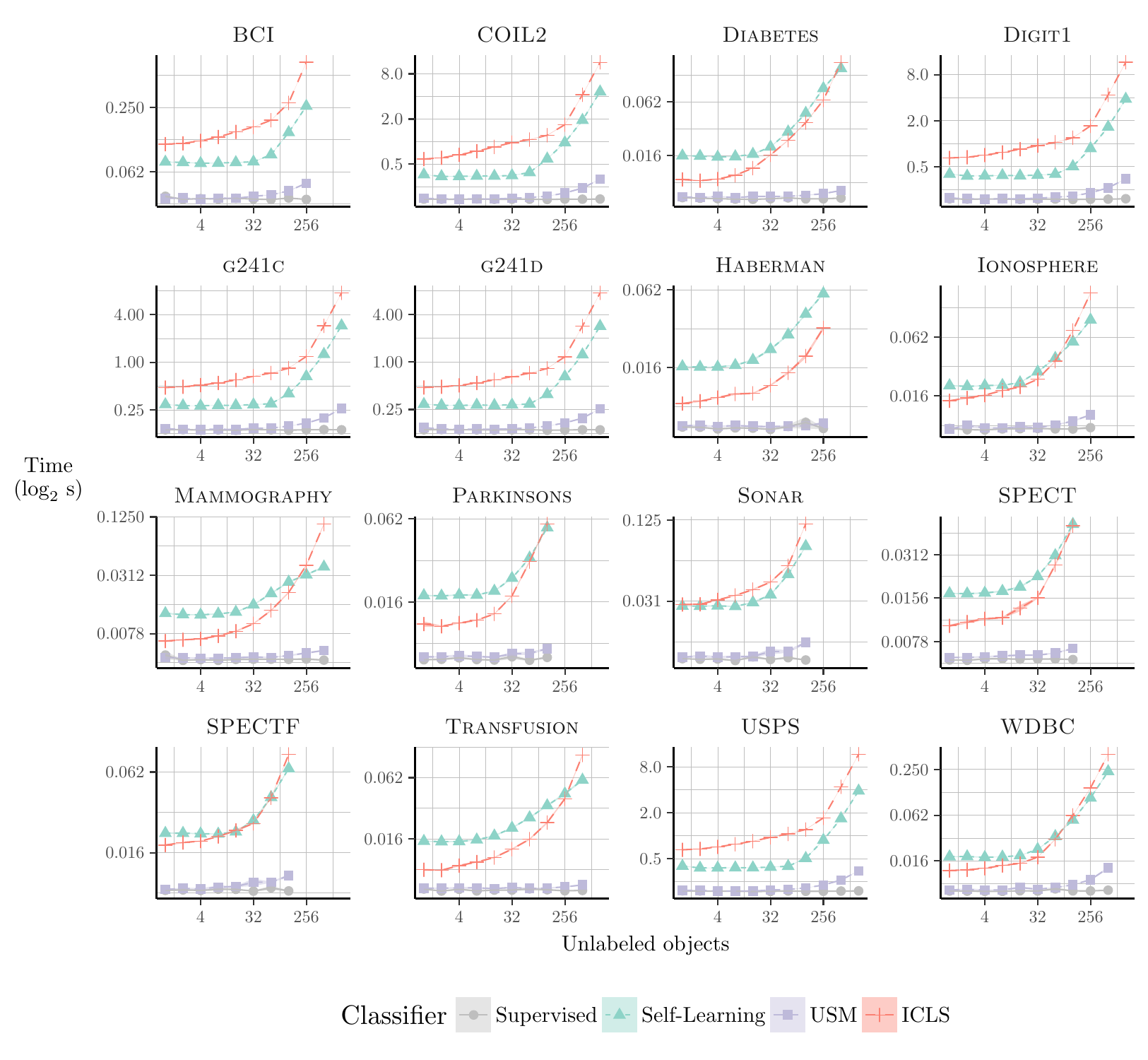} \caption[Average Training Time for $1000$ repeats]{Average Training Time for $1000$ repeats. The shaded areas indicate $+/-$ the standard error of the mean.}\label{fig:timecurves}
\end{figure}

\end{knitrout}

\subsection*{Squared Loss}
Generally, models used in practice do not directly minimize misclassification error. For computational reasons, often convex surrogate losses, such as the one employed here are minimized. It is therefore interesting to look at the performance of a classifier in terms of these surrogate losses \cite{Loog2016a}. We have chosen to restrict ourselves to a particular convex loss and attempted to ensure improvement in terms of this chosen loss function.

When we compare the average squared loss on the test set, ICLS, USM and self-learning often seem to offer similar performance. This is quite unlike the results in, for instance \cite{Loog2010,Loog2014b}, where the self-learner often performed much worse in terms of the loss than an approach based on constraining the solution using unlabeled data. While \cite{Loog2010,Loog2014b} consider a generative classifier, we consider a discriminative classifier, in which case self-learning may be less susceptible to increases in the loss. Self-learning does, however, still increase the loss on some datasets, unlike ICLS.

The peaking phenomenon described in \cite{Opper1996,Raudys1998} is known to occur for squared loss minimization when we increase the number of labeled samples. Here we find it also occurs when we change the number of unlabeled samples. It seems that ICLS and USM are more sensitive to this problem than self-learning. As yet, we do not have any explanation for this behavior. Further improvements to the current approach may start by trying to understand this occurrence of peaking.

\subsection*{Other Losses}
While the results presented in this work are promising for squared loss, an open question is what other classifiers could benefit from the implicitly constrained approach considered here. Using negative log likelihood as a loss function, for instance, also leads to an interesting implicitly constrained semi-supervised classifier, for instance, in linear discriminant analysis \cite{Krijthe2014}. 

In the derivation of ICLS, we made use of the closed-form solution given an imputed labeling to derive a quadratic programming problem in terms of the labels. For many loss functions, closed-form solutions do not exist, which prohibits a straightforward formulation of their implicitly constrained semi-supervised counterparts. Without a supervised closed-form solution one cannot straightaway apply techniques like gradient descent to the parameters as this typically leads to solutions that are outside of the set $\Cb$, even if the loss considered is differentiable.

\subsection*{More Constraints}
In Figure \ref{fig:constrainedsubset}, we illustrate that projecting onto the subset $\Cb$ causes improvement as long as a better solution $\hat{\beta}_{oracle}$ than the supervised solution is within $\Cb$. A smaller $\Cb$ will give a larger improvement, since the semi-supervised solution is going to be closer to $\hat{\beta}_{oracle}$. In the extreme case where only $\hat{\beta}_{oracle}$ forms the subset, this clearly gives a large improvement over supervised learning. It therefore makes sense to think about reducing the size of $\Cb$. In the approach presented in this work, however, to ensure a better solution $\hat{\beta}_{oracle}$ than the supervised solution is always within the constraint set with probability $P(\hat{\beta}_{oracle} \in \Cb)=1$, our choice of $\Cb$ is conservatively large. It contains elements corresponding to all labelings of the unlabeled points, even extremely unlikely ones. 

By excluding unlikely labelings from the subset, the size of $\Cb$ may shrink, while the probability that it includes $\hat{\beta}_{oracle}$ remains high. For instance, one might exclude labelings with class priors that are very unlikely to occur, given the class priors that are observed in the labeled data, a strategy which is also employed in Transductive SVMs where it is necessary for it to converge to meaningful local optima. Changes to $\Cb$ may, therefore, allow for larger improvements in terms of the risk or classification error, while introducing a small chance of deterioration in performance.

\section{Conclusion}
This work introduced a new semi-supervised approach to least squares classification. By implicitly considering all possible labelings of the unlabeled objects and choosing the one that minimizes the loss on the labeled observations, we derived a robust classifier with a simple quadratic programming formulation. For this procedure, in the univariate setting with a linear model without intercept, we can prove it never degrades performance in terms of squared loss (Theorem 1). Experimental results indicate that in expectation this robustness also holds in terms of classification error on real datasets. Hence, semi-supervised learning for least squares classification without additional assumptions can lead to improvements over supervised least squares classification both in theory and in practice.

\section*{Acknowledgement}
Part of this work was funded by project P23 of the Dutch public-private research community COMMIT.

\bibliographystyle{elsarticle-num}
\bibliography{library}

\end{document}